\newtheorem{theorem}{Theorem}
\newtheorem{corollary}{Corollary}
\definecolor{lightblue}{rgb}{0.9, 0.95, 1.0}
\definecolor{mygreen}{rgb}{0.01, 0.5, 0.01}
\definecolor{myred}{rgb}{0.8, 0.01, 0.01}
\title{FourierKAN outperforms MLP on \\Text Classification Head Fine-tuning}
\author{Abdullah Al Imran\thanks{\ \ Equal contribution}\\
  University of Liverpool\\
  \texttt{a.al-imran@liverpool.ac.uk} \\\And
  Md Farhan Ishmam\footnotemark[1]  \\
  Islamic University of Technology \\
  \texttt{farhanishmam@iut-dhaka.edu} \\
  }
\author{%
  Abdullah Al Imran\thanks{Equal Contribution} \\
  University of Liverpool\\
  \texttt{a.al-imran@liverpool.ac.uk}\\
  \And
  Md Farhan Ishmam\footnotemark[1] \\
  Islamic University of Technology \\
  \texttt{farhanishmam@iut-dhaka.edu}\\
}
\begin{document}

\maketitle

\begin{abstract}
In resource constraint settings, adaptation to downstream classification tasks involves fine-tuning the final layer of a classifier (i.e. classification head) while keeping rest of the model weights frozen. Multi-Layer Perceptron (MLP) heads fine-tuned with pre-trained transformer backbones have long been the de facto standard for text classification head fine-tuning. However, the fixed non-linearity of MLPs often struggles to fully capture the nuances of contextual embeddings produced by pre-trained models, while also being computationally expensive. In our work, we investigate the efficacy of KAN and its variant, Fourier KAN (FR-KAN), as alternative text classification heads. Our experiments reveal that FR-KAN significantly outperforms MLPs with an average improvement of 10\% in accuracy and 11\% in F1-score across seven pre-trained transformer models and four text classification tasks. Beyond performance gains, FR-KAN is more computationally efficient and trains faster with fewer parameters. These results underscore the potential of FR-KAN to serve as a lightweight classification head, with broader implications for advancing other Natural Language Processing (NLP) tasks.
\end{abstract}

\section{Introduction}

Classification head fine-tuning, also known as \emph{linear probing}, is a widely adopted strategy that involves training the final classification layer while the backbone model remains frozen. This approach allows efficient adaptation to downstream tasks especially in resource constraint settings \cite{gao2023tuning} and improved robustness in out-of-domain distributional shifts \cite{kumar2022fine}, compared to standard fine-tuning. In text classification tasks, Multi-Layer Perceptron (MLP) \cite{hornik1989multilayerMLP} classification heads are usually fine-tuned with pre-trained transformer backbones \cite{vaswani2017attention}. 

MLPs are fully connected or dense neural networks, used across domains including time series analysis \cite{liu2018time}, computer vision \cite{lecun1998gradient, krizhevsky2017imagenet}, and speech processing \cite{hannun2014deep}. MLP classifiers are able to capture the non-linearity and aggregate the high-dimensional contextualized embedding produced by the feature extractor to the fixed set of output classes. While it is undeniable that MLPs have revolutionized deep learning, they have a few noticeable limitations \cite{muhlenbein1990limitations}.

MLPs account for most of the trainable parameters in the transformer architecture while being less interpretable compared to methods, \emph{e.g.} self-attention \cite{cunningham2023sparse}. Reducing the parameter count while maintaining performance has been explored by various strategies, such as network pruning \cite{han2015learning}, and quantization \cite{jacob2018quantization}, but with significant tradeoffs. Recent advancements in MLP alternatives, such as Kolmogorov-Arnold Networks (KANs) \cite{liu2024kan}, show promising results to replace MLPs by \emph{learning the non-linearity} \cite{dhiman2024kan} instead of relying on the fixed non-linear activations used in MLPs. 

Our work aims to investigate the adaptation of the KAN as a text classification head in resource-constraint settings by exploring its efficacy in linear probing. We primarily focus on the potential application of Fourier-KAN (FR-KAN) \cite{xu2024fourierkan}, a modification of the spline-based KAN variant using the Fourier series. To the best of our knowledge, we are the first to adapt the KAN architecture as an MLP alternative in linear probing. We also observed that the simple modification of using FR-KANs instead of MLPs as the classification head resulted in an average increase of $10\%$ in accuracy and $11\%$ in F1 score across $7$ text classification datasets using pre-trained transformer backbones. 

\section{Methodology}

\subsection{Text Classification Head}

For the contextual embedding, $\mathbf{H}= f(\mathbf{x};\mathbf{\theta}_f)$, produced by the pre-trained language model $f$ where $\mathbf{x}$ is the input text embedding and $\theta_f$ is the frozen model parameters, we formulate the predicted answer class, 
\begin{equation}
    \hat{y} = \underset{\substack{c \in \mathcal{C}}}{\arg\max} \text{ Head}(\mathbf{H})
\end{equation}
where, $\text{Head}:\mathbb{R}^\textbf{H} \rightarrow \mathbb{R}^c$, $\text{Head} \in \{\text{MLP}, \text{KAN}, \text{FR-KAN}\}$, and $c$ represents an answer class from the answer class set $\mathcal{C}$. It should be noted that the pre-trained language model $f$ acts as a feature extractor only. We utilize the cross-entropy or the negative log-likelihood loss during the classifier head fine-tuning, mathematically expressed, 
\begin{equation}
    \label{eq:crossEntropyLoss}
    L(\hat{y}, y) = -\sum_{i=1}^{|\mathcal{C}|} y_i \log(\hat{y}_i)
\end{equation}

\subsection{Kolmogorov-Arnold Network (KAN)}
Following the formulation of KANs with arbitrary depth and width \cite{liu2024kan}, a single KAN layer is defined as:
\begin{equation}
\label{eq:kan}
\text{KAN}(\mathbf{H}) = f(\mathbf{H}) = \sum_{j=1}^{2n+1} \Phi_j \left( \sum_{i=1}^{n} \phi_{ij}(h_i) \right)
\end{equation}
where, $\phi_{ij}$ are univariate continuous functions mapping the input vector $x$, such that, $\phi_{ij}:[0,1] \rightarrow \mathbb{R}$ and $\Phi_j$ are learnable activation functions, such that, $\Phi_j:\mathbb{R} \rightarrow \mathbb{R}$. The KAN layer can be analogous to a 2-layer MLP where the first layer computes the inner sum $\sum_{i=1}^{n} \phi_{ij}(x_i)$ and the next layer applies and sums $\Phi_j$ to the previous layer output. The original implementation of KAN \cite{liu2024kan} follows a residual layer formulation of the learnable activation function:
\begin{equation}
\label{eq:phiBSpline}
    \phi_b(x) = w(b(x) + \text{spline}(x))
\end{equation}
where, the basis function $b(x)$ are defined as,
\begin{equation}
    b(x) = \text{silu}(x) = \frac{x}{1 + e^{-x}} 
\end{equation}
and the splines can be formulated as the weighted sum of B-splines,
\begin{equation}
\label{eq:spline}
    \text{spline}(x) = \sum_{i=1}^{G} c_i B_i(x)
\end{equation}
where, $c_i$ are trainable parameters and $G$ is the grid size.

\begin{theorem}
\label{th:convTheorem}
Assume with Fourier coefficients $a_{k}, b_{k}$ and grid size $G$, the Fourier series for the function $f(x)$ taking the form:
\begin{equation*}
     f_{G}(x) = \sum_{k=0}^{G} \left( 
    a_{k} \cdot \cos(kx) + b_{k} \cdot \sin(kx)  \right)
\end{equation*}
converges to a corresponding univariate function over a finite interval $[a,b]$ as $G \rightarrow \infty$, given the function is continuous. 
\end{theorem}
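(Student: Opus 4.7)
The plan is to reduce the problem to a standard setting on $[-\pi,\pi]$ and then invoke classical Fourier convergence theorems. First, I would apply an affine change of variables to transport the problem from $[a,b]$ to $[-\pi,\pi]$, under which the trigonometric system $\{1,\cos(kx),\sin(kx)\}$ remains a complete orthogonal system in $L^2([-\pi,\pi])$. I would then extend the rescaled $f$ periodically to all of $\mathbb{R}$; to keep the extension continuous I either assume or enforce $f(a)=f(b)$, and otherwise restrict the convergence claim to the interior of the interval.

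With the setup in place, I would appeal to Fej\'er's theorem: for any continuous $2\pi$-periodic function, the Ces\`aro means of the partial Fourier sums converge uniformly to $f$, which upgrades to $L^2$ convergence of the partial sums $f_G$ themselves via Bessel's inequality and Parseval's identity. An alternative route is through the Stone--Weierstrass theorem, which shows that trigonometric polynomials form a uniformly dense subalgebra of $C([-\pi,\pi])$; combining this density with the fact that the truncation $f_G$ is the $L^2$-best trigonometric polynomial of degree at most $G$ yields $\|f-f_G\|_{L^2}\to 0$. Either approach ultimately relies on the completeness of the trigonometric system, which is the analytic heart of the argument.

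The main obstacle is interpretive rather than technical: the classical Du Bois--Reymond counterexample shows that pointwise convergence $f_G(x)\to f(x)$ can fail at isolated points for merely continuous $f$, so one must be careful about the mode of convergence being asserted. I would therefore clarify that the theorem is best read as uniform convergence of the Ces\`aro means or as $L^2$ convergence of the partial sums; if pointwise or uniform convergence of $f_G$ itself is genuinely needed downstream, I would strengthen the hypothesis to piecewise $C^1$ or H\"older continuity and invoke Dirichlet's theorem in place of Fej\'er's. A final bookkeeping step rewrites the coefficient formulas back in terms of the original interval $[a,b]$, with the arguments $kx$ in the statement reinterpreted as $\tfrac{2\pi k (x-a)}{b-a}$ under the change of variables.
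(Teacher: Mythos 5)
Your route is genuinely different from the paper's and, importantly, more careful. The paper argues via a direct tail bound: it writes $|f(x)-f_G(x)|\le\sum_{k>G}(|a_k|+|b_k|)$ and then tries to control this by Cauchy--Schwarz, $\bigl(\sum_{k>G}|c_k|\bigr)^2\le\bigl(\sum_{k>G}1\bigr)\bigl(\sum_{k>G}|c_k|^2\bigr)$. That step is vacuous because $\sum_{k>G}1$ diverges; square-summability of the coefficients does not imply absolute summability (consider $c_k=1/k$), so the paper's claimed uniform convergence for merely continuous $f$ does not follow --- and, as you correctly flag, it is false in general by the du Bois--Reymond construction. Your plan avoids this trap: rescale to $[-\pi,\pi]$, handle the periodic-extension endpoint issue explicitly, and invoke Fej\'er's theorem (or Stone--Weierstrass plus $L^2$ best approximation) to obtain uniform convergence of the Ces\`aro means and $L^2$ convergence of the partial sums $f_G$; you then rightly note that genuine pointwise or uniform convergence of $f_G$ itself requires strengthening the hypothesis to H\"older or piecewise $C^1$ regularity (Dini or Dirichlet--Jordan). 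What your approach buys is a true statement proved by standard machinery, whereas the paper's tail-sum argument does not establish what it claims. One small addition worth making explicit: the change of variables sends $kx$ to $2\pi k(x-a)/(b-a)$, since $\{\cos(kx),\sin(kx)\}$ on an arbitrary $[a,b]$ is not an orthogonal system as written.
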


\begin{proof}
   
The convergence of the Fourier series to a univariate function can be proved via pointwise, uniform, or mean square (or $L^2)$ convergence. We are particularly interested in uniform convergence, which implies pointwise and mean square convergence. For pointwise convergence, Dirichlet's proof states $f_{G}(x)$ converges at points of continuities and takes the average value of $(f(d^{+})+f(d^{-}))/{2}$ when jump discontinuity is observed at $x=d$.

We generalize the Fourier coefficients $a_k$ and $b_k$ as $c_k$ and consider the Riemann-Lebesgue lemma on the Fourier coefficients \emph{i.e.} $c_k \rightarrow 0$ as $k \rightarrow 0$. From pointwise convergence, we state for $x \in [a,b]$, 
\begin{equation}
    |f(x) - f_G(x)| = \left| \sum_{k=G+1}^{\infty} a_k \cdot \cos(kx) + \sum_{k=G+1}^{\infty} b_k \cdot \sin(kx) \right|
\end{equation}
Since, $\cos(kx) \leq 1$ and $\sin(kx) \leq 1$,
\begin{equation}
|f(x) - f_G(x)| \leq \sum^{\infty}_{k=G+1}(|a_k| + |b_k|)
\end{equation}
To ensure uniform convergence, the sum of the trailing elements is required to be bounded. As the Fourier coefficients are square summable \emph{i.e.} $\sum_{k=1}^{\infty} |c_k|^2 < \infty$, we apply the Cauchy-Schwarz inequality:
\begin{equation}
\left(  
    \sum^\infty_{k=G+1} |c_k|
\right)^2
\leq \left(  
    \sum^\infty_{k=G+1} 1^2
\right)
\left(  
    \sum^\infty_{k=G+1}  |c_k|^2
\right)
\end{equation}
$|c_k|^2$ converges as the Fourier coefficients are square summable. Hence, the tail sum of $\sum^\infty_{k=G+1}  |c_k|^2$ can be arbitary small as $G \rightarrow \infty$ and the Fourier series converges uniformly to $f(x)$ on $[a,b]$.
\end{proof}

\begin{corollary}
\label{cr:gridError}
    As $G \rightarrow \infty$, the truncation error of the Fourier series, $E_G \rightarrow 0$.
\end{corollary}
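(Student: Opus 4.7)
The plan is to obtain the corollary as a direct consequence of Theorem~\ref{th:convTheorem}. I would first fix notation by defining the truncation error as $E_G := \sup_{x \in [a,b]} |f(x) - f_G(x)|$, i.e.\ the uniform distance between the true function and its order-$G$ Fourier partial sum. This makes the statement of the corollary precise, and reduces it to showing that this supremum vanishes.

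Next, I would invoke the bound already derived inside the proof of Theorem~\ref{th:convTheorem}, namely
\begin{equation*}
|f(x) - f_G(x)| \leq \sum_{k=G+1}^{\infty}(|a_k| + |b_k|),
\end{equation*}
which holds uniformly in $x$ because $|\cos(kx)|, |\sin(kx)| \leq 1$. Taking the supremum on the left immediately yields $E_G \leq \sum_{k=G+1}^{\infty}(|a_k| + |b_k|)$. The right-hand side is independent of $x$, so the problem reduces to analyzing a single scalar tail sum.

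I would then reuse the Cauchy--Schwarz estimate from the theorem. Applying it to $(|a_k|)$ and $(|b_k|)$ separately against a summable majorant (or, equivalently, appealing to square summability of the Fourier coefficients $\sum_k |c_k|^2 < \infty$ combined with the triangle inequality), I would conclude that the tail $\sum_{k=G+1}^{\infty}(|a_k| + |b_k|)$ is the tail of a convergent series and therefore tends to $0$ as $G \to \infty$. Combining this with the bound on $E_G$ gives $E_G \to 0$, which is the claim.

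The main obstacle, if any, is essentially bookkeeping: the Cauchy--Schwarz argument from Theorem~\ref{th:convTheorem} bounds $\bigl(\sum_{k>G} |c_k|\bigr)^2$ by a product that contains a diverging factor $\sum_{k>G} 1^2$, so a naive reuse does not close. The cleanest fix is to appeal directly to the uniform convergence established in Theorem~\ref{th:convTheorem}: uniform convergence is by definition the statement $\sup_{x \in [a,b]} |f(x) - f_G(x)| \to 0$, and the corollary then follows with essentially no further work. I would phrase the proof so that this observation is explicit, keeping the argument short and avoiding any subtlety about absolute versus $L^2$ summability of the coefficients.
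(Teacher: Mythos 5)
Your proposal is correct and matches the paper's (implicit) approach: the corollary carries no separate proof in the paper and is understood to be a direct restatement of the uniform convergence established in Theorem~\ref{th:convTheorem}, which is exactly the reduction you make in your final paragraph. Your side observation is also accurate and worth noting: the Cauchy--Schwarz estimate inside the theorem's proof contains the divergent factor $\sum_{k=G+1}^{\infty} 1^2$ and therefore does not by itself show the tail $\sum_{k>G}(|a_k|+|b_k|)$ vanishes, so bypassing that internal estimate and appealing directly to the theorem's conclusion (uniform convergence, i.e.\ $\sup_{x\in[a,b]}|f(x)-f_G(x)|\to 0$) is the right call.
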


\subsection{FouRier KAN (FR-KAN)}

The residual formulation using B-splines in Eq. \ref{eq:phiBSpline} can be replaced with the Fourier series following the convergence of the series up to $G$ terms in Theorem \ref{th:convTheorem}. Hence, the univariate continuous function in Eq. \ref{eq:kan} can be defined as:
\begin{equation}
    \label{eq:frKAN}
    \phi_f(x) = \sum_{k=0}^{G} \left( 
    a_{k} \cdot \cos(kx) + b_{k} \cdot \sin(kx)  \right)
\end{equation}
where, $a_k$ and $b_k$ are the trainable Fourier coefficients and $G$ is the grid size. 

\subsection{Multi Layer Perceptron (MLP)}
\label{sec:mlpClassifierHead}
We define $\text{MLP}_L$ as an $L$-layer perceptron with the trainable weights, $W_{L-1}$ and bias, $b_{L-1}$. We formulate a $1$-layer perceptron:
\begin{equation}
\label{eq:mlp1}
    \text{MLP}_1(\mathbf{H}) = \text{softmax}(W_0 \mathbf{H} + b_0)
\end{equation}
where the softmax function is defined as:
\begin{equation}
\label{eq:softmax}
\text{softmax}(z_i) = \frac{e^{z_i}}{\sum_{j=1}^{n} e^{z_j}}
\end{equation}
We also define a 2-layer perceptron:
\begin{align}
h_0 &= \sigma(W_0 \mathbf{H} + b_0)\\ 
\text{MLP}_2(\mathbf{H}) &= \text{softmax}(W_1 h_{0} + b_1) \label{eq:mlp2}
\end{align}
where, the non-linear sigmoid, $\sigma$ function is defined:
\begin{equation}
\label{eq:sigmoid}
    \sigma(x) = \frac{1}{1 + e^{-x}}
\end{equation}

\section{Experiments}

\begin{table*}[ht]
\centering

\centering
\resizebox{.99\textwidth}{!}{%
\begin{tabular}{lcccccccc}

\toprule 
\textbf{Dataset} & \textbf{Task Name}      & \textbf{Type} & \textbf{\#Classes} & \textbf{Avg. Len} & \textbf{Max Len} & \textbf{\#Train} & \textbf{\#Val} & \textbf{\#Test} \\ \midrule
AgNews           & News Classification & TC     & 4                  & 44                & 221              & 10.5k             & 2.25k & 2.25k \\
DBpedia          & Ontology Classification & TC & 14                 & 67                & 3841             & 10.5k             & 2.25k & 2.25k             \\
IMDb             & Movie Sentiment & SA      & 2                  & 292               & 3045             & 10.5k             & 2.25k & 2.25k            \\
Papluca          & Language Identification & LI & 20              & 111               & 2422                 & 10.5k             & 2.25k & 2.25k          \\
SST-5            & General Sentiment & SA      & 5                  & 103                 & 283               & 8.3k             & 1.78k     & 1.78k \\
TREC-50          & Question Classification & QC & 50                 & 11                & 39               & 4.17k             & 893         &  893  \\
YELP-Full        & Review Sentiment & SA      & 5                  & 179               & 2342             & 10.5k             & 2.25k & 2.25k       \\ \bottomrule  
\end{tabular}}

\caption{Statistics of the text classification datasets used in our work. Full form of the types -- SA: Sentiment Analysis, TC: Topic Classification, QC: Question Classification, LI: Language Identification.}

\label{tab:datasets}

\end{table*}

\subsection{Tasks and Datasets}
We chose four types of text classification tasks and seven datasets to evaluate our models. The overall statistics of the datasets are shown in Tab. \ref{tab:datasets}.

\paragraph{Sentiment Analysis}

We use three datasets -- IMDb \cite{maas2011learning}, SST-5 \cite{socher2013recursive}, and Yelp-full \cite{zhang2015character} for sentiment analysis. The IMDb is a binary classification dataset on movie reviews, while SST-5 and YELP-full are multi-class classification datasets on movie reviews and general reviews, respectively.     

\paragraph{Topic Classification}
We use two datasets AgNews \cite{zhang2015character} and DBpedia \cite{zhang2015character} for topic classification. The AgNews dataset classifies news topics from over $2000$ news sources into $4$ topic classes. DBpedia introduces ontology classification as a form of topic classification on $14$ ontology classes, each with $40$k training samples and $5$k test samples.

\paragraph{Question Classification}
We use the $50$ class or fine-grained variant of the TREC dataset \cite{voorhees-tice-2000-trec} consisting of open-domain questions for question classification. 

\paragraph{Language Identification}
The Papluca dataset \cite{papluca_language_identification} classifies the text language into $20$ uniformly distributed classes.

\subsection{Models}
We utilize seven variants of pre-trained transformer models to generate the contextual embedding as seen in Tab. \ref{table:models}. BART \cite{lewis2019bart} is the only encoder-decoder model while BERT \cite{devlin2018bert}, DeBERTa \cite{he2020deberta}, DistilBERT \cite{sanh2019distilbert}, ELECTRA \cite{clark2020electra}, RoBERTa \cite{liu2019roberta}, and XLNet \cite{yang2019xlnet} are encoder-only models. BART has $12$ layers encoder and $12$ decoder layers, while the other models have $12$ encoder layers with the exception of DistilBERT with $6$ encoder layers.

\begin{table*}[ht]
\centering
      \begin{varwidth}[b]{0.5\linewidth}
\centering
\resizebox{0.79\textwidth}{!}{%
\begin{tabular}{lccc}
\toprule
\textbf{Model} & \multicolumn{1}{c}{\textbf{Arch}} & \multicolumn{1}{c}{\textbf{\#L}} & \multicolumn{1}{c}{\textbf{\#PC (M)}} \\ \midrule
BART           & ED                                 & 12+12                               & 140                                  \\
BERT           & E                                  & 12                               & 110                                  \\
DeBERTa        & E                                  & 12                               & 139                                  \\
DistilBERT     & E                                  & 6                                & 66                                   \\
ELECTRA        & E                                  & 12                               & 110                                  \\
RoBERTa        & E                                  & 12                               & 125                                  \\
XLNet          & E                                  & 12                               & 110                                   \\ \bottomrule
\end{tabular}}
\caption{Architecture type [E: Encoder, D: Decoder], number of layers, and parameter count of the transformer models.}
        \label{table:models}
      \end{varwidth}%
      \hspace{1cm}
      \begin{minipage}[b]{0.3\linewidth}
        \centering
        \includegraphics[width=\linewidth]{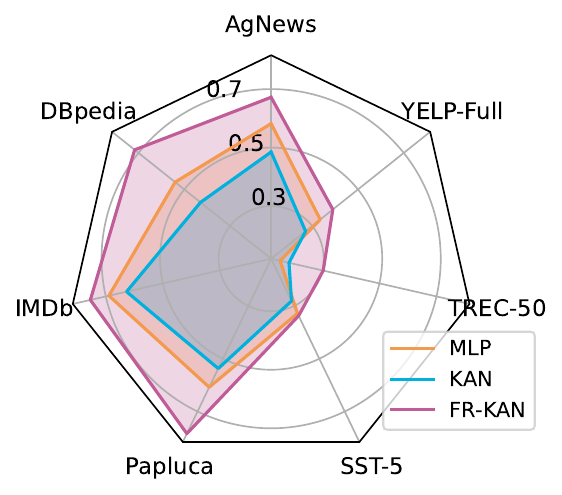}
        \captionof{figure}{Comparison of average accuracy of different classification heads.}
        \label{fig:radar}
      \end{minipage}
    \end{table*}

\subsection{Evaluation Metrics}
We evaluate the classification performance using four key metrics: accuracy, macro-averaged F1 score (simply F1 score), micro-averaged F1 score, and Cohen's kappa coefficient.

\subsection{Experimental Setup}
All models were fine-tuned on an A4000 GPU with 16 GiB of GPU memory. The default tokenizer and embedding layers corresponding to each model were used. Each model features a hidden dimension of $768$ and $12$ self-attention heads. The implementation and training configurations followed the HuggingFace library \cite{wolf-etal-2020-transformers}. The classifiers were fine-tuned using Adam optimizer with the max length set to $512$ and the batch size fixed at $64$. Identical training configurations were used across dataset-model pairs to ensure fair evaluation of the classification heads.

\subsection{Hyperparameters}
To ensure fairness of evaluation, all classification heads are defined as $1$-layer architectures, excluding the input layer. $2$ layer MLPs (Eq. \ref{eq:mlp2}) have also been later specified to align the number of trainable parameters with that of the other heads. The width of the hidden layer of MLP varied depending on the classification dataset. Unless specified, the original KAN and FR-KAN layers use a grid size of $1$ and $5$ respectively. To evaluate in resource-constraint settings, all the classifiers were fine-tuned for $5$ epochs at the learning rate of $2e-5$.

\begin{table*}[ht]
       \centering
      \begin{minipage}[b]{0.475\linewidth}
        \centering
        \includegraphics[width=0.91\linewidth]{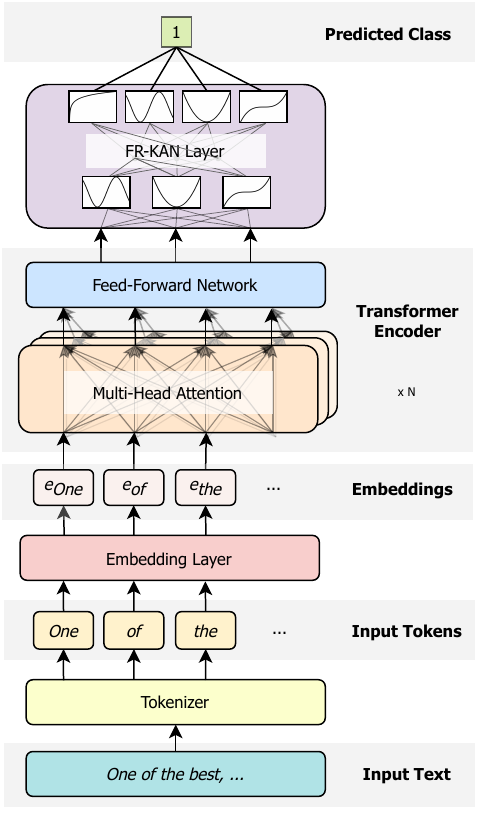}
        \captionof{figure}{Overview of the architecture with FR-KAN classification head -- following the standard tokenization and embedding, the input text is passed to a pre-trained transformer encoder. The FR-KAN layer maps the contextualized embedding produced by the transformer to the output classes.}
        \label{fig:archi}
      \end{minipage}
      \hfill
            \begin{varwidth}[b]{0.475\linewidth}
        \centering
        \resizebox{0.995\textwidth}{!}{
            \begin{tabular}{ccccc}
\toprule
\textbf{Dataset}             & \textbf{Classifier} & \textbf{\#PC(k) \textcolor{myred}{$\downarrow$}} & \textbf{Acc \textcolor{mygreen}{$\uparrow$}} & \textbf{F1 \textcolor{mygreen}{$\uparrow$}} \\ \midrule
\multirow{3}{*}{AgNews}  & MLP-40              & 30.9                  & 0.835        & 0.831       \\ 
& KAN-1 & 30.7 &0.812	& 0.813  \\ 
&\cellcolor{lightblue}FR-KAN-5               &\cellcolor{lightblue}30.7                  &\cellcolor{lightblue}\textbf{0.877}        &\cellcolor{lightblue}\textbf{0.876}       \\

       & Diff                & \textcolor{mygreen}{-0.2}                   & \textcolor{mygreen}{+0.04}        &  \textcolor{mygreen}{+0.05}       \\ \midrule
\multirow{3}{*}{Dbpedia}                           
                         & MLP-138             & 108.1                 & 0.892        & 0.891       \\ 
                         &KAN-1&107.5&0.843&0.842\\
                          &\cellcolor{lightblue}FR-KAN-5               &\cellcolor{lightblue}107.5                 & \cellcolor{lightblue}\textbf{0.970}        &\cellcolor{lightblue}\textbf{0.971}       \\
                         & Diff                & \textcolor{mygreen}{-0.6}                     &  \textcolor{mygreen}{+0.08}        &  \textcolor{mygreen}{+0.08}       \\ \midrule
\multirow{3}{*}{IMDb}                                
                         & MLP-20              & 15.4                  & 0.778        & 0.777       \\ 
                         & KAN-1 & 15.4 & 0.739 &0.739\\
                         &\cellcolor{lightblue}FR-KAN-5               &\cellcolor{lightblue}15.4                  & \cellcolor{lightblue}\textbf{0.831}        &\cellcolor{lightblue}\textbf{0.830}       \\
                         & Diff                & 0.0                   &  \textcolor{mygreen}{+0.05}        &  \textcolor{mygreen}{+0.05}      \\ \midrule 
\multirow{3}{*}{Papluca}                             
                         & MLP-196             & 154.6                 & 0.816        & 0.819       \\ 
                         & KAN-1 & 153.6 & 0.730 & 0.733\\
                          & \cellcolor{lightblue}FR-KAN-5               &\cellcolor{lightblue}153.6                 &\cellcolor{lightblue}\textbf{0.986}        &\cellcolor{lightblue}\textbf{0.986}       \\
                         & Diff                & \textcolor{mygreen}{-1.0}                     &  \textcolor{mygreen}{+0.17}        & \textcolor{mygreen}{+0.17}      \\ \midrule
\multirow{3}{*}{SST-5}        
                         & MLP-50              & 38.7                  & 0.351        & 0.176       \\ 
                         & KAN-1 & 38.4 & 0.307 & 0.231\\
                          &\cellcolor{lightblue}FR-KAN-5               &\cellcolor{lightblue}38.4                  &\cellcolor{lightblue}\textbf{0.401}        &\cellcolor{lightblue}\textbf{0.336}       \\
                         & Diff                & \textcolor{mygreen}{-0.3}                    & \textcolor{mygreen}{+0.05}       & \textcolor{mygreen}{+0.16 }       \\ \midrule
\multirow{3}{*}{TREC-50}          
                         & MLP-477             & 512.8                 & 0.179        & 0.006       \\ 
                         & KAN-1 & 384 & 0.188	& 0.017 \\
                         &\cellcolor{lightblue}FR-KAN-5               &\cellcolor{lightblue}384                   & \cellcolor{lightblue}\textbf{0.351}        &\cellcolor{lightblue}\textbf{0.057}       \\
                         & Diff                & \textcolor{mygreen}{-0.8}                    & \textcolor{mygreen}{+0.17}        & \textcolor{mygreen}{+0.05 }       \\ \midrule
\multirow{3}{*}{YELP-Full}     
                         & MLP-50              & 38.7                  & 0.458        & 0.456       \\ 
                         & KAN-1 & 38.4 & 0.338 & 0.324\\
                         &\cellcolor{lightblue}FR-KAN-5               &\cellcolor{lightblue}38.4                  &\cellcolor{lightblue}\textbf{0.492}        &\cellcolor{lightblue}\textbf{0.475}       \\
                         & Diff                & \textcolor{mygreen}{-0.3}                     & \textcolor{mygreen}{+0.03}        &\textcolor{mygreen}{+0.02}       \\ \bottomrule
\end{tabular}%

        }
        \caption{Parameter count, accuracy, and F1 score of DistilBERT using different classification heads across all datasets. MLP-$x$ represents a 2-layer perceptron with $x$ hidden layer width. KAN-$x$ and FR-KAN-$x$ represent the corresponding 1-layer network with grid size $x$. The differences between the FR-KAN head and MLP have been shown.}
        \label{tab:performanceParameters}
      \end{varwidth}%
\end{table*}

\definecolor{lightblue}{rgb}{0.9, 0.95, 1.0}
\definecolor{mygreen}{rgb}{0.01, 0.5, 0.01}
\definecolor{myred}{rgb}{0.8, 0.01, 0.01}

\begin{table*}[ht]
\resizebox{\textwidth}{!}{%
\begin{tabular}{cccccccccccccccc}
\toprule
\multicolumn{2}{c}{\textbf{Method}} & \multicolumn{2}{c}{\textbf{AgNews}} & \multicolumn{2}{c}{\textbf{DBpedia}} & \multicolumn{2}{c}{\textbf{IMDb}} & \multicolumn{2}{c}{\textbf{Papluca}} & \multicolumn{2}{c}{\textbf{SST-5}} & \multicolumn{2}{c}{\textbf{TREC-50}} & \multicolumn{2}{c}{\textbf{YELP-Full}} \\ \cmidrule(lr){1-2} \cmidrule(lr){3-4} \cmidrule(lr){5-6} \cmidrule(lr){7-8} \cmidrule(lr){9-10} \cmidrule(lr){11-12} \cmidrule(lr){13-14} \cmidrule(lr){15-16}  
\textbf{Backbone} & \multicolumn{1}{c}{\textbf{Head}}                                 & \textbf{Acc \textcolor{mygreen}{$\uparrow$}}     & \textbf{F1 \textcolor{mygreen}{$\uparrow$}}      & \textbf{Acc \textcolor{mygreen}{$\uparrow$}}      & \textbf{F1 \textcolor{mygreen}{$\uparrow$}}      & \textbf{Acc \textcolor{mygreen}{$\uparrow$}}    & \textbf{F1 \textcolor{mygreen}{$\uparrow$}}     & \textbf{Acc \textcolor{mygreen}{$\uparrow$}}      & \textbf{F1 \textcolor{mygreen}{$\uparrow$}}      & \textbf{Acc \textcolor{mygreen}{$\uparrow$}}    & \textbf{F1 \textcolor{mygreen}{$\uparrow$}}    & \textbf{Acc \textcolor{mygreen}{$\uparrow$}}    & \textbf{F1 \textcolor{mygreen}{$\uparrow$}}     & \textbf{Acc \textcolor{mygreen}{$\uparrow$}}    & \textbf{F1 \textcolor{mygreen}{$\uparrow$}}     \\
\midrule
\multirow{4}{*}{BART}               & MLP            & 0.612            & 0.610            & 0.808             & 0.806            & \textbf{0.769}  & \textbf{0.769}  & 0.835             & 0.833            & \textbf{0.303}  & 0.181          & 0.254           & 0.042           & 0.364           & 0.357           \\
                                    & KAN            & 0.351            & 0.352            & 0.459             & 0.459            & 0.594           & 0.594           & 0.657             & 0.654            & 0.260           & 0.216          & 0.283           & 0.055           & 0.254           & 0.250           \\
                                    &\cellcolor{lightblue}FR-KAN         &\cellcolor{lightblue}\textbf{0.653}   &\cellcolor{lightblue}\textbf{0.651}   &\cellcolor{lightblue}\textbf{0.872}    &\cellcolor{lightblue}\textbf{0.872}   &\cellcolor{lightblue}0.749           &\cellcolor{lightblue}0.749           &\cellcolor{lightblue}\textbf{0.880}    &\cellcolor{lightblue}\textbf{0.879}   &\cellcolor{lightblue}0.273           &\cellcolor{lightblue}\textbf{0.237} &\cellcolor{lightblue}\textbf{0.451}  &\cellcolor{lightblue}\textbf{0.122}  &\cellcolor{lightblue}\textbf{0.390}  &\cellcolor{lightblue}\textbf{0.385}  \\ 
                                   & Diff & \textcolor{mygreen}{+0.04}  & \textcolor{mygreen}{+0.04}  & \textcolor{mygreen}{+0.06}  & \textcolor{mygreen}{+0.07}  & \textcolor{myred}{-0.02} & \textcolor{myred}{-0.02} & \textcolor{mygreen}{+0.05}  & \textcolor{mygreen}{+0.05}  & \textcolor{myred}{-0.03}  & \textcolor{mygreen}{+0.06} & \textcolor{mygreen}{+0.20}  & \textcolor{mygreen}{+0.08} & \textcolor{mygreen}{+0.03} & \textcolor{mygreen}{+0.03} \\
                                    \midrule
\multirow{4}{*}{BERT}               & MLP            & 0.772            & 0.770            & 0.752             & 0.746            & 0.763           & 0.762           & 0.579             & 0.581            & 0.374           & 0.197          & 0.181           & 0.008           & 0.440           & 0.433           \\
                                    & KAN            & 0.722            & 0.721            & 0.674             & 0.672            & 0.730           & 0.729           & 0.391             & 0.348            & 0.317           & 0.260          & 0.186           & 0.027           & 0.336           & 0.321           \\
                                    &\cellcolor{lightblue}FR-KAN         &\cellcolor{lightblue}\textbf{0.834}   &\cellcolor{lightblue}\textbf{0.833}   &\cellcolor{lightblue}\textbf{0.939}    &\cellcolor{lightblue}\textbf{0.938}   &\cellcolor{lightblue}\textbf{0.812}  &\cellcolor{lightblue}\textbf{0.811}  &\cellcolor{lightblue}\textbf{0.946}    &\cellcolor{lightblue}\textbf{0.945}   &\cellcolor{lightblue}\textbf{0.406}  &\cellcolor{lightblue}\textbf{0.339} &\cellcolor{lightblue}\textbf{0.378}  &\cellcolor{lightblue}\textbf{0.069}  &\cellcolor{lightblue}\textbf{0.471}  &\cellcolor{lightblue}\textbf{0.450}  \\
                                    & Diff & \textcolor{mygreen}{+0.06}  & \textcolor{mygreen}{+0.06}  & \textcolor{mygreen}{+0.19}  & \textcolor{mygreen}{+0.19}  & \textcolor{mygreen}{+0.05}  & \textcolor{mygreen}{+0.05}  & \textcolor{mygreen}{+0.37}  & \textcolor{mygreen}{+0.36}  & \textcolor{mygreen}{+0.03}  & \textcolor{mygreen}{+0.14} & \textcolor{mygreen}{+0.20}  & \textcolor{mygreen}{+0.06} & \textcolor{mygreen}{+0.03} & \textcolor{mygreen}{+0.02} \\
                                    \midrule
\multirow{4}{*}{DeBERTa}            & MLP            & 0.554            & 0.549            & 0.567             & 0.568            & 0.710           & 0.708           & 0.848             & 0.850            & \textbf{0.417}  & 0.289          & 0.199           & 0.015           & 0.393           & 0.380           \\
                                    & KAN            & 0.400            & 0.400            & 0.357             & 0.347            & 0.666           & 0.666           & 0.751             & 0.753            & 0.292           & 0.271          & 0.167           & 0.037           & 0.319           & 0.314           \\
                                    &\cellcolor{lightblue}FR-KAN         &\cellcolor{lightblue}\textbf{0.595}   &\cellcolor{lightblue}\textbf{0.594}   &\cellcolor{lightblue}\textbf{0.648}    &\cellcolor{lightblue}\textbf{0.648}   &\cellcolor{lightblue}\textbf{0.770}  &\cellcolor{lightblue}\textbf{0.770}  &\cellcolor{lightblue}\textbf{0.920}    &\cellcolor{lightblue}\textbf{0.923}   &\cellcolor{lightblue}0.367           &\cellcolor{lightblue}\textbf{0.333} &\cellcolor{lightblue}\textbf{0.377}  &\cellcolor{lightblue}\textbf{0.076}  &\cellcolor{lightblue}\textbf{0.412}  &\cellcolor{lightblue}\textbf{0.409}  \\ 
                                     & Diff & \textcolor{mygreen}{+0.04}  & \textcolor{mygreen}{+0.04}  & \textcolor{mygreen}{+0.08}  & \textcolor{mygreen}{+0.08}  & \textcolor{mygreen}{+0.06}  & \textcolor{mygreen}{+0.06}  & \textcolor{mygreen}{+0.07}  & \textcolor{mygreen}{+0.07}  & \textcolor{myred}{-0.05} & \textcolor{mygreen}{+0.04} & \textcolor{mygreen}{+0.18}  & \textcolor{mygreen}{+0.06} & \textcolor{mygreen}{+0.02} & \textcolor{mygreen}{+0.03} \\
                                    \midrule
\multirow{4}{*}{DistilBERT}         & MLP            & 0.836            & 0.834            & 0.865             & 0.863            & 0.795           & 0.794           & 0.878             & 0.877            & 0.352           & 0.184          & 0.096           & 0.009           & 0.429           & 0.420           \\
                                    & KAN            & 0.812            & 0.813            & 0.843             & 0.842            & 0.739           & 0.739           & 0.730             & 0.733            & 0.307           & 0.231          & 0.188           & 0.017           & 0.338           & 0.324           \\
                                    &\cellcolor{lightblue}FR-KAN         &\cellcolor{lightblue}\textbf{0.877}   &\cellcolor{lightblue}\textbf{0.876}   & \cellcolor{lightblue}\textbf{0.970}    &\cellcolor{lightblue}\textbf{0.971}   & \cellcolor{lightblue}\textbf{0.831}  &\cellcolor{lightblue}\textbf{0.830}  &\cellcolor{lightblue}\textbf{0.986}    &\cellcolor{lightblue}\textbf{0.986}   &\cellcolor{lightblue}\textbf{0.401}  &\cellcolor{lightblue}\textbf{0.336} &\cellcolor{lightblue}\textbf{0.351}  &\cellcolor{lightblue}\textbf{0.057}  &\cellcolor{lightblue}\textbf{0.492}  &\cellcolor{lightblue}\textbf{0.475} \\
                                     & Diff & \textcolor{mygreen}{+0.04}  & \textcolor{mygreen}{+0.04}  & \textcolor{mygreen}{+0.11}  & \textcolor{mygreen}{+0.11}  & \textcolor{mygreen}{+0.04}  & \textcolor{mygreen}{+0.04}  & \textcolor{mygreen}{+0.11}  & \textcolor{mygreen}{+0.11}  & \textcolor{mygreen}{+0.05}  & \textcolor{mygreen}{+0.15} & \textcolor{mygreen}{+0.26}  & \textcolor{mygreen}{+0.05} & \textcolor{mygreen}{+0.06} & \textcolor{mygreen}{+0.06} \\
                                    \midrule
\multirow{4}{*}{ELECTRA}            & MLP            & 0.480            & 0.471            & 0.364             & 0.338            & 0.592           & 0.590           & 0.539             & 0.509            & 0.326           & 0.175          & 0.096           & 0.009           & 0.291           & 0.284           \\
                                    & KAN            & 0.384            & 0.378            & 0.296             & 0.255            & 0.558           & 0.552           & 0.477             & 0.459            & 0.295           & 0.229          & 0.179           & 0.016           & 0.240           & 0.233           \\
                                    &\cellcolor{lightblue}FR-KAN         &\cellcolor{lightblue}\textbf{0.612}   &\cellcolor{lightblue}\textbf{0.606}   &\cellcolor{lightblue}\textbf{0.610}    &\cellcolor{lightblue}\textbf{0.609}   &\cellcolor{lightblue}\textbf{0.745}  &\cellcolor{lightblue}\textbf{0.745}  &\cellcolor{lightblue}\textbf{0.672}    &\cellcolor{lightblue}\textbf{0.670}   &\cellcolor{lightblue}\textbf{0.326}  &\cellcolor{lightblue}\textbf{0.285} &\cellcolor{lightblue}\textbf{0.338}  &\cellcolor{lightblue}\textbf{0.064}  &\cellcolor{lightblue}\textbf{0.370}  &\cellcolor{lightblue}\textbf{0.352}  \\
                                    & Diff & \textcolor{mygreen}{+0.13}  & \textcolor{mygreen}{+0.14}  & \textcolor{mygreen}{+0.25}  & \textcolor{mygreen}{+0.27}  &\textcolor{mygreen}{+0.15}  & \textcolor{mygreen}{+0.16}  & \textcolor{mygreen}{+0.13}  &\textcolor{mygreen}{+0.16}  & 0.00  & \textcolor{mygreen}{+0.11} & \textcolor{mygreen}{+0.24}  &\textcolor{mygreen}{+0.06} & \textcolor{mygreen}{+0.08} & \textcolor{mygreen}{+0.07} \\
                                    \midrule
\multirow{4}{*}{RoBERTa}            & MLP            & 0.420            & 0.304            & 0.258             & 0.155            & 0.578           & 0.546           & 0.327             & 0.226            & 0.269           & 0.085          & 0.132           & 0.008           & 0.209           & 0.088           \\
                                    & KAN            & 0.448            & 0.434            & 0.253             & 0.214            & 0.568           & 0.521           & 0.584             & 0.528            & 0.269           & 0.125          & 0.179           & 0.006           & 0.203           & 0.190           \\
                                    &\cellcolor{lightblue}FR-KAN         &\cellcolor{lightblue}\textbf{0.836}   &\cellcolor{lightblue}\textbf{0.832}   &\cellcolor{lightblue}\textbf{0.844}    &\cellcolor{lightblue}\textbf{0.829}   &\cellcolor{lightblue}\textbf{0.819}  &\cellcolor{lightblue}\textbf{0.819}  &\cellcolor{lightblue}\textbf{0.925}    &\cellcolor{lightblue}\textbf{0.910}   &\cellcolor{lightblue}\textbf{0.328}  &\cellcolor{lightblue}\textbf{0.189} &\cellcolor{lightblue}\textbf{0.179}  &\cellcolor{lightblue}\textbf{0.006}  &\cellcolor{lightblue}\textbf{0.369}  &\cellcolor{lightblue}\textbf{0.306}  \\
                                    & Diff & \textcolor{mygreen}{+0.42}  & \textcolor{mygreen}{+0.53}  & \textcolor{mygreen}{+0.59}  & \textcolor{mygreen}{+0.67}  & \textcolor{mygreen}{+0.24}  & \textcolor{mygreen}{+0.27}  & \textcolor{mygreen}{+0.60}  & \textcolor{mygreen}{+0.68}  & \textcolor{mygreen}{+0.06}  & \textcolor{mygreen}{+0.10} & \textcolor{mygreen}{+0.05}  & +0.00 & \textcolor{mygreen}{+0.16} & \textcolor{mygreen}{+0.22} \\\midrule
\multirow{4}{*}{XLNet}              & MLP            & \textbf{0.399}   & \textbf{0.375}   & \textbf{0.163}    & \textbf{0.156}   & \textbf{0.616}  & \textbf{0.613}  & \textbf{0.248}    & \textbf{0.195}   & 0.255           & 0.190          & \textbf{0.105}  & \textbf{0.019}  & 0.218           & 0.212           \\
                                    & KAN            & 0.275            & 0.261            & 0.123             & 0.112            & 0.536           & 0.528           & 0.166             & 0.153            & 0.225           & 0.211          & 0.102           & 0.015           & 0.207           & 0.203           \\
                                    &\cellcolor{lightblue}FR-KAN         & \cellcolor{lightblue}0.300            &\cellcolor{lightblue}0.293            &\cellcolor{lightblue}0.133             &\cellcolor{lightblue}0.124            &\cellcolor{lightblue}0.552           &\cellcolor{lightblue}0.533           &\cellcolor{lightblue}0.157             &\cellcolor{lightblue}0.138            &\cellcolor{lightblue}0.255           &\cellcolor{lightblue}\textbf{0.237} &\cellcolor{lightblue}0.048           &\cellcolor{lightblue}0.018           &\cellcolor{lightblue}\textbf{0.223}  &\cellcolor{lightblue}\textbf{0.221}  \\
                                     & Diff & \textcolor{myred}{-0.10} & \textcolor{myred}{-0.08} & \textcolor{myred}{-0.03} & \textcolor{myred}{-0.03} & \textcolor{myred}{-0.06 }& \textcolor{myred}{-0.08} & \textcolor{myred}{-0.09 }& \textcolor{myred}{-0.06} & {0.00}  & \textcolor{mygreen}{+0.05} & \textcolor{myred}{-0.06} & {0.00} & \textcolor{mygreen}{+0.01} & \textcolor{mygreen}{+0.01} \\\midrule
\multirow{4}{*}{Average}            & MLP            & 0.582            & 0.559            & 0.540             & 0.519            & 0.689           & 0.683           & 0.608             & 0.582            & 0.328           & 0.186          & 0.152           & 0.016           & 0.335           & 0.311           \\
                                    & KAN            & 0.485            & 0.480            & 0.429             & 0.414            & 0.627           & 0.618           & 0.537             & 0.518            & 0.281           & 0.220          & 0.183           & 0.025           & 0.271           & 0.262           \\
                                    &\cellcolor{lightblue}FR-KAN         &\cellcolor{lightblue}\textbf{0.672}   &\cellcolor{lightblue}\textbf{0.669}   &\cellcolor{lightblue}\textbf{0.717}    &\cellcolor{lightblue}\textbf{0.713}   &\cellcolor{lightblue}\textbf{0.754}  &\cellcolor{lightblue}\textbf{0.751}  &\cellcolor{lightblue}\textbf{0.784}    &\cellcolor{lightblue}\textbf{0.779}   &\cellcolor{lightblue}\textbf{0.337}  &\cellcolor{lightblue}\textbf{0.279} &\cellcolor{lightblue}\textbf{0.303}  &\cellcolor{lightblue}\textbf{0.059}  &\cellcolor{lightblue}\textbf{0.390}  &\cellcolor{lightblue}\textbf{0.371} \\
                                    & Diff &\textcolor{mygreen}{+0.09}  &\textcolor{mygreen}{+0.11}  &\textcolor{mygreen}{+0.18}  &\textcolor{mygreen}{+0.19}  &\textcolor{mygreen}{+0.07}  &\textcolor{mygreen}{+0.07}  &\textcolor{mygreen}{+0.18}  &\textcolor{mygreen}{+0.20}  &\textcolor{mygreen}{+0.01}  &\textcolor{mygreen}{+0.09} &\textcolor{mygreen}{+0.15}  &\textcolor{mygreen}{+0.04} &\textcolor{mygreen}{+0.05} &\textcolor{mygreen}{+0.06} \\\bottomrule
\end{tabular}%
}
\caption{Accuracy and F1 score of MLP, KAN, and FR-KAN classification heads on different backbones, evaluated across text classification datasets. FR-KAN consistently outperformed the other heads in both Accuracy and F1 score, with a few exceptions, such as for XLNet. The performance difference between the FR-KAN and the MLP head has been shown.}
\label{tab:performanceAccF1}
\end{table*}

\definecolor{lightblue}{rgb}{0.9, 0.95, 1.0}
\definecolor{mygreen}{rgb}{0.01, 0.5, 0.01}
\definecolor{myred}{rgb}{0.8, 0.01, 0.01}
\begin{table*}[ht]

\centering
\resizebox{\textwidth}{!}{%
\begin{tabular}{cccccccccccccccc}
\toprule
\multicolumn{2}{c}{\textbf{Method}} & \multicolumn{2}{c}{\textbf{AgNews}} & \multicolumn{2}{c}{\textbf{DBpedia}} & \multicolumn{2}{c}{\textbf{IMDb}} & \multicolumn{2}{c}{\textbf{Papluca}} & \multicolumn{2}{c}{\textbf{SST-5}} & \multicolumn{2}{c}{\textbf{TREC-50}} & \multicolumn{2}{c}{\textbf{YELP-Full}} \\ \cmidrule(lr){1-2} \cmidrule(lr){3-4} \cmidrule(lr){5-6} \cmidrule(lr){7-8} \cmidrule(lr){9-10} \cmidrule(lr){11-12} \cmidrule(lr){13-14} \cmidrule(lr){15-16} 
\textbf{Backbone} & \multicolumn{1}{c}{\textbf{Head}} &  \textbf{mF1} $\textcolor{mygreen}{\uparrow}$ &  $\kappa~ \textcolor{mygreen}{\uparrow}$ &  \textbf{mF1} $\textcolor{mygreen}{\uparrow}$ &  $\kappa~ \textcolor{mygreen}{\uparrow}$ &  \textbf{mF1} $\textcolor{mygreen}{\uparrow}$ &  $\kappa~ \textcolor{mygreen}{\uparrow}$ &  \textbf{mF1} $\textcolor{mygreen}{\uparrow}$ &  $\kappa~ \textcolor{mygreen}{\uparrow}$ &  \textbf{mF1} $\textcolor{mygreen}{\uparrow}$ &  $\kappa~ \textcolor{mygreen}{\uparrow}$ &  \textbf{mF1} $\textcolor{mygreen}{\uparrow}$ &  $\kappa~ \textcolor{mygreen}{\uparrow}$ &  \textbf{mF1} $\textcolor{mygreen}{\uparrow}$ &  $\kappa~ \textcolor{mygreen}{\uparrow}$\\ \midrule
\multirow{4}{*}{BART}           & MLP       & 0.612        & 0.483       & 0.808        & 0.793        & \textbf{0.769}       & \textbf{0.538}      & 0.835        & 0.826        & \textbf{0.303}       & 0.057       & 0.254        & 0.109        & 0.364         & 0.205         \\
                                & KAN       & 0.351        & 0.136       & 0.459        & 0.417        & 0.594       & 0.188      & 0.657        & 0.639        & 0.260       & 0.035       & 0.283        & 0.180        & 0.254         & 0.068         \\
                                & \cellcolor{lightblue} FR-KAN    & \cellcolor{lightblue}\textbf{0.653}        &\cellcolor{lightblue} \textbf{0.538}       &\cellcolor{lightblue} \textbf{0.872}        &\cellcolor{lightblue} \textbf{0.862 }       & \cellcolor{lightblue}0.749       & \cellcolor{lightblue}0.498      &\cellcolor{lightblue} \textbf{0.880}        &\cellcolor{lightblue}\textbf{0.873}        &\cellcolor{lightblue} 0.273       & \cellcolor{lightblue}\textbf{0.058}       & \cellcolor{lightblue}\textbf{0.451}        & \cellcolor{lightblue}\textbf{0.396}        &\cellcolor{lightblue}\textbf{0.390}         &\cellcolor{lightblue}\textbf{0.238}         \\ 
                                  & Diff & \textcolor{mygreen}{+0.04}  & \textcolor{mygreen}{+0.06}  & \textcolor{mygreen}{+0.06}  & \textcolor{mygreen}{+0.07}  & \textcolor{myred}{-0.02} & \textcolor{myred}{-0.04} & \textcolor{mygreen}{+0.04}  & \textcolor{mygreen}{+0.05}  & \textcolor{myred}{-0.03} & 0.00  & \textcolor{mygreen}{+0.20}  & \textcolor{mygreen}{+0.29}  & \textcolor{mygreen}{+0.03} & \textcolor{mygreen}{+0.03} \\                                
                                \midrule
\multirow{4}{*}{BERT}           & MLP       & 0.772        & 0.696       & 0.752        & 0.732        & 0.763       & 0.527      & 0.579        & 0.556        & 0.374       & 0.147       & 0.181        & 0.003        & 0.440         & 0.300         \\
                                & KAN       & 0.722        & 0.630       & 0.674        & 0.649        & 0.730       & 0.459      & 0.391        & 0.357        & 0.317       & 0.107       & 0.186        & 0.061        & 0.336         & 0.169         \\
                                & \cellcolor{lightblue}FR-KAN    &\cellcolor{lightblue}\textbf{0.834}        &\cellcolor{lightblue} \textbf{0.778}       & \cellcolor{lightblue}\textbf{0.939}        &\cellcolor{lightblue} \textbf{0.934}        &\cellcolor{lightblue} \textbf{0.812}       &\cellcolor{lightblue} \textbf{0.624}      &\cellcolor{lightblue} \textbf{0.946}        &\cellcolor{lightblue} \textbf{0.943}        &\cellcolor{lightblue} \textbf{0.406}       &\cellcolor{lightblue} \textbf{0.221}       &\cellcolor{lightblue} \textbf{0.378}        & \cellcolor{lightblue}\textbf{0.294}        & \cellcolor{lightblue}\textbf{0.471}         & \cellcolor{lightblue}\textbf{0.339}         \\ 
                                & Diff & \textcolor{mygreen}{+0.06}  & \textcolor{mygreen}{+0.08}  & \textcolor{mygreen}{+0.19}  & \textcolor{mygreen}{+0.20}  & \textcolor{mygreen}{+0.05}  & \textcolor{mygreen}{+0.10}  & \textcolor{mygreen}{+0.37}  & \textcolor{mygreen}{+0.39}  & \textcolor{mygreen}{+0.03}  & \textcolor{mygreen}{+0.07}  & \textcolor{mygreen}{+0.20}  & \textcolor{mygreen}{+0.29}  & \textcolor{mygreen}{+0.03} & \textcolor{mygreen}{+0.04} \\
                                \midrule
\multirow{4}{*}{DeBERTa}        & MLP       & 0.554        & 0.405       & 0.567        & 0.533        & 0.710       & 0.421      & 0.848        & 0.840        & \textbf{0.417}       & \textbf{0.216}       & 0.199        & 0.030        & 0.393         & 0.241         \\
                                & KAN       & 0.400        & 0.201       & 0.357        & 0.308        & 0.666       & 0.333      & 0.751        & 0.738        & 0.292       & 0.095       & 0.167        & 0.064        & 0.319         & 0.148         \\
                                & \cellcolor{lightblue}FR-KAN    & \cellcolor{lightblue}\textbf{0.595}        &\cellcolor{lightblue} \textbf{0.460}       & \cellcolor{lightblue}\textbf{0.648 }       & \cellcolor{lightblue}\textbf{0.621}        & \cellcolor{lightblue}\textbf{0.770}       & \cellcolor{lightblue}\textbf{0.539}      & \cellcolor{lightblue}\textbf{0.920}        &\cellcolor{lightblue} \textbf{0.916}        &\cellcolor{lightblue} 0.367       &\cellcolor{lightblue} 0.178       &\cellcolor{lightblue} \textbf{0.377}        &\cellcolor{lightblue} \textbf{0.294}        &\cellcolor{lightblue} \textbf{0.412}         &\cellcolor{lightblue} \textbf{0.265}         \\
                                & Diff & \textcolor{mygreen}{+0.04}  & \textcolor{mygreen}{+0.05}  & \textcolor{mygreen}{+0.08}  & \textcolor{mygreen}{+0.09}  & \textcolor{mygreen}{+0.06}  & \textcolor{mygreen}{+0.12}  &\textcolor{mygreen}{+0.07}  & \textcolor{mygreen}{+0.08}  & \textcolor{myred}{-0.05} & \textcolor{myred}{-0.04} & \textcolor{mygreen}{+0.18}  & \textcolor{mygreen}{+0.26}  & \textcolor{mygreen}{+0.02} & \textcolor{mygreen}{+0.02} \\
                                \midrule
\multirow{4}{*}{DistilBERT}     & MLP       & 0.836        & 0.781       & 0.865        & 0.854        & 0.795       & 0.591      & 0.878        & 0.871        & 0.352       & 0.117       & 0.096        & 0.019        & 0.429         & 0.287         \\
                                & KAN       & 0.812        & 0.750       & 0.843        & 0.831        & 0.739       & 0.477      & 0.730        & 0.716        & 0.307       & 0.081       & 0.188        & 0.033        & 0.338         & 0.172         \\
                                &\cellcolor{lightblue}FR-KAN    &\cellcolor{lightblue}\textbf{0.877}        &\cellcolor{lightblue}\textbf{0.836}       &\cellcolor{lightblue}\textbf{0.970}        &\cellcolor{lightblue}\textbf{0.968}        &\cellcolor{lightblue}\textbf{0.831}       &\cellcolor{lightblue}\textbf{0.662}      &\cellcolor{lightblue}\textbf{0.986}        &\cellcolor{lightblue}\textbf{0.985}        &\cellcolor{lightblue}\textbf{0.401}       &\cellcolor{lightblue}\textbf{0.208}       &\cellcolor{lightblue}\textbf{0.351}        &\cellcolor{lightblue}\textbf{0.258}        &\cellcolor{lightblue}\textbf{0.492}         &\cellcolor{lightblue}\textbf{0.365}         \\
                                & Diff & \textcolor{mygreen}{+0.04}  & \textcolor{mygreen}{+0.06}  & \textcolor{mygreen}{+0.11}  & \textcolor{mygreen}{+0.11}  & \textcolor{mygreen}{+0.04}  & \textcolor{mygreen}{+0.07}  & \textcolor{mygreen}{+0.11}  & \textcolor{mygreen}{+0.11}  & \textcolor{mygreen}{+0.05}  & \textcolor{mygreen}{+0.09}  & \textcolor{mygreen}{+0.25}  & \textcolor{mygreen}{+0.24}  & \textcolor{mygreen}{+0.06} & \textcolor{mygreen}{+0.08} \\
                                \midrule
\multirow{4}{*}{ELECTRA}        & MLP       & 0.480        & 0.306       & 0.364        & 0.315        & 0.592       & 0.186      & 0.539        & 0.514        & 0.326       & 0.082       & 0.096        & 0.007        & 0.291         & 0.114         \\
                                & KAN       & 0.384        & 0.178       & 0.296        & 0.242        & 0.558       & 0.118      & 0.477        & 0.450        & 0.295       & 0.069       & 0.179        & 0.020        & 0.240         & 0.051         \\
                                &\cellcolor{lightblue}FR-KAN    &\cellcolor{lightblue}\textbf{0.612}        &\cellcolor{lightblue}\textbf{0.482}       &\cellcolor{lightblue}\textbf{0.610}        &\cellcolor{lightblue}\textbf{0.580}        &\cellcolor{lightblue}\textbf{0.745}       &\cellcolor{lightblue}\textbf{0.490}      &\cellcolor{lightblue}\textbf{0.672}        &\cellcolor{lightblue}\textbf{0.655}        &\cellcolor{lightblue}0.326       &\cellcolor{lightblue}\textbf{0.124}       &\cellcolor{lightblue}\textbf{0.338}        &\cellcolor{lightblue}\textbf{0.249}        &\cellcolor{lightblue}\textbf{0.370}         &\cellcolor{lightblue}\textbf{0.213}         \\
                                 & Diff & \textcolor{mygreen}{+0.13}  & \textcolor{mygreen}{+0.18}  & \textcolor{mygreen}{+0.25}  & \textcolor{mygreen}{+0.26}  & \textcolor{mygreen}{+0.15}  & \textcolor{mygreen}{+0.30}  & \textcolor{mygreen}{+0.13}  & \textcolor{mygreen}{+0.14}  & 0.00  & \textcolor{mygreen}{+0.04}  & \textcolor{mygreen}{+0.24}  & \textcolor{mygreen}{+0.24}  & \textcolor{mygreen}{+0.08} & \textcolor{mygreen}{+0.10} \\
                                \midrule
\multirow{4}{*}{RoBERTa}        & MLP       & 0.420        & 0.224       & 0.258        & 0.197        & 0.578       & 0.151      & 0.327        & 0.292        & 0.269       & 0.000       & 0.132        & -0.008       & 0.209         & 0.013         \\
                                & KAN       & 0.448        & 0.263       & 0.253        & 0.193        & 0.568       & 0.129      & 0.584        & 0.561        & 0.269       & 0.005       & 0.179        & 0.000        & 0.203         & 0.002         \\
                                &\cellcolor{lightblue}FR-KAN    &\cellcolor{lightblue}\textbf{0.836}        &\cellcolor{lightblue}\textbf{0.781}       &\cellcolor{lightblue}\textbf{0.844}        &\cellcolor{lightblue}\textbf{0.832}        &\cellcolor{lightblue}\textbf{0.819}       &\cellcolor{lightblue}\textbf{0.639}      &\cellcolor{lightblue}\textbf{0.925}        &\cellcolor{lightblue}\textbf{0.921}        &\cellcolor{lightblue}\textbf{0.328}       &\cellcolor{lightblue}\textbf{0.089}       &\cellcolor{lightblue}\textbf{0.179}        &\cellcolor{lightblue}0.000        &\cellcolor{lightblue}\textbf{0.369}         &\cellcolor{lightblue}\textbf{0.212}         \\
                                & Diff & \textcolor{mygreen}{+0.42}  & \textcolor{mygreen}{+0.56}  & \textcolor{mygreen}{+0.59}  & \textcolor{mygreen}{+0.64}  & \textcolor{mygreen}{+0.24}  & \textcolor{mygreen}{+0.49}  & \textcolor{mygreen}{+0.60}  & \textcolor{mygreen}{+0.63}  & \textcolor{mygreen}{+0.06}  & \textcolor{mygreen}{+0.09}  & \textcolor{mygreen}{+0.05}  & \textcolor{mygreen}{+0.01}  & \textcolor{mygreen}{+0.16} & \textcolor{mygreen}{+0.20} \\
                                \midrule
\multirow{4}{*}{XLNet}          & MLP       & \textbf{0.399}        & \textbf{0.198}       & \textbf{0.163}        & \textbf{0.099}        & \textbf{0.616}       & \textbf{0.234}      & \textbf{0.248}        & \textbf{0.207}        & 0.255       & 0.011       & \textbf{0.105}        & \textbf{0.022}        & 0.218         & 0.021         \\
                                & KAN       & 0.275        & 0.035       & 0.123        & 0.055        & 0.536       & 0.070      & 0.166        & 0.122        & 0.225       & 0.013       & 0.102        & -0.001       & 0.207         & 0.008         \\
                                &\cellcolor{lightblue}FR-KAN    &\cellcolor{lightblue}0.300        &\cellcolor{lightblue}0.066       &\cellcolor{lightblue}0.133        &\cellcolor{lightblue}0.066        &\cellcolor{lightblue}0.552       &\cellcolor{lightblue}0.101      &\cellcolor{lightblue}0.157        &\cellcolor{lightblue}0.112        &\cellcolor{lightblue}0.255       &\cellcolor{lightblue}\textbf{0.048}       &\cellcolor{lightblue}0.048        &\cellcolor{lightblue}0.003        &\cellcolor{lightblue}\textbf{0.223}         &\cellcolor{lightblue}\textbf{0.029}         \\
                                & Diff & \textcolor{myred}{-0.10} & \textcolor{myred}{-0.13} & \textcolor{myred}{-0.03} & \textcolor{myred}{-0.03} & \textcolor{myred}{-0.06} & \textcolor{myred}{-0.13} & \textcolor{myred}{-0.09} & \textcolor{myred}{-0.09} & 0.00  & \textcolor{mygreen}{+0.04}  & \textcolor{myred}{-0.06} & \textcolor{myred}{-0.02} & 0.00 & \textcolor{mygreen}{+0.01} \\
                                \midrule
\multirow{4}{*}{Average}        & MLP       & 0.582        & 0.442       & 0.540        & 0.503        & 0.689       & 0.378      & 0.608        & 0.587        & 0.328       & 0.090       & 0.152        & 0.026        & 0.335         & 0.169         \\
                                & KAN       & 0.485        & 0.313       & 0.429        & 0.385        & 0.627       & 0.253      & 0.537        & 0.512        & 0.281       & 0.058       & 0.183        & 0.051        & 0.271         & 0.088         \\ 
                                &\cellcolor{lightblue}FR-KAN    &\cellcolor{lightblue}\textbf{0.673}        &\cellcolor{lightblue}\textbf{0.563}       &\cellcolor{lightblue}\textbf{0.717}        &\cellcolor{lightblue}\textbf{0.695}        &\cellcolor{lightblue}\textbf{0.754}       &\cellcolor{lightblue}\textbf{0.508}      &\cellcolor{lightblue}\textbf{0.784}        &\cellcolor{lightblue}\textbf{0.772}        &\cellcolor{lightblue}\textbf{0.337}       &\cellcolor{lightblue}\textbf{0.132}       &\cellcolor{lightblue}\textbf{0.303}        &\cellcolor{lightblue}\textbf{0.213}        &\cellcolor{lightblue}\textbf{0.390}         &\cellcolor{lightblue}\textbf{0.237}     \\
                                & Diff & \textcolor{mygreen}{+0.09}  & \textcolor{mygreen}{+0.12}  & \textcolor{mygreen}{+0.18}  & \textcolor{mygreen}{+0.19}  & \textcolor{mygreen}{+0.06}  & \textcolor{mygreen}{+0.13}  & \textcolor{mygreen}{+0.18}  & \textcolor{mygreen}{+0.19}  & \textcolor{mygreen}{+0.01}  & \textcolor{mygreen}{+0.04}  & \textcolor{mygreen}{+0.15}  & \textcolor{mygreen}{+0.19}  & \textcolor{mygreen}{+0.05} & \textcolor{mygreen}{+0.07} \\
                                \bottomrule   
\end{tabular}%
} 
\caption{Micro F1 and Kappa score of MLP, KAN, and FR-KAN classification heads on different backbones, evaluated across text classification datasets. Similar to Table \ref{tab:performanceAccF1}, the FR-KAN head consistently outperforms the MLP and KAN heads.
}
\label{tab:performanceMicroKappa}
\end{table*}

\section{Result Analysis}

\subsection{Efficacy of FR-KAN head fine-tuning}
\label{sec:exp1}
The FR-KAN head significantly outperformed the MLP (Eq. \ref{eq:mlp1}) and KAN (Eq. \ref{eq:kan} and \ref{eq:phiBSpline}) classification heads across most dataset-model pairs (Tab. \ref{tab:performanceAccF1} and Appendix Tab. \ref{tab:performanceMicroKappa}), while requiring similar time to fine-tune. Among the models, RoBERTa showed the largest improvement with an average increase of $0.3$ in accuracy and $0.35$ in F1 score over the MLP heads. The only exception was XLNet, where FR-KAN classifiers performed similarly or slightly worse compared to the MLPs, with an average decrease of $0.05$ in accuracy and $0.03$ in F1 score. Nonetheless, across all the datasets, the FR-KAN classifiers substantially outperformed both MLPs and KANs with an average increase of $10\%$ in accuracy and $11\%$ in F1 score over MLPs.

\begin{figure}[ht]
    \centering
        \begin{subfigure}{0.32\linewidth}
        \centering
        \includegraphics[width=\linewidth]{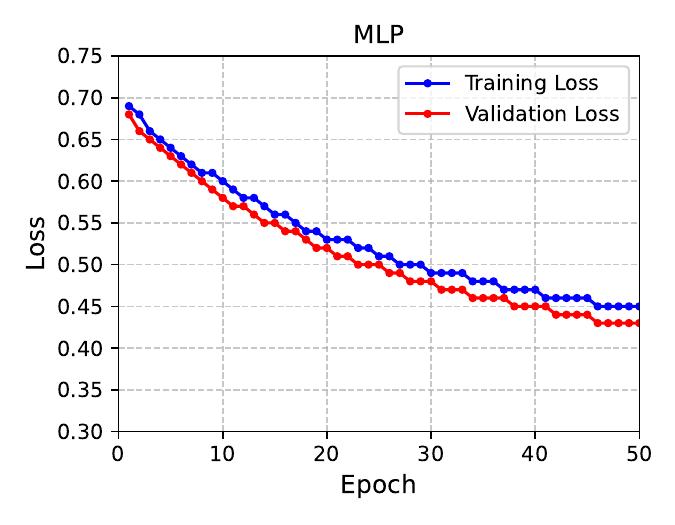}
        \caption{}
        \label{fig:MLPLoss}
    \end{subfigure} \hfill
    \begin{subfigure}{0.32\linewidth}
        \centering
        \includegraphics[width=\linewidth]{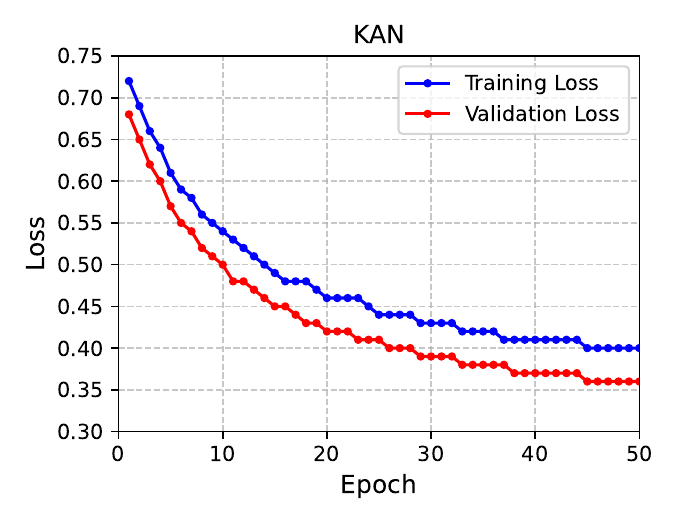}
        \caption{}
        \label{fig:KANLoss}
    \end{subfigure}\hfill
    \begin{subfigure}{0.32\linewidth}
        \centering
        \includegraphics[width=\linewidth]{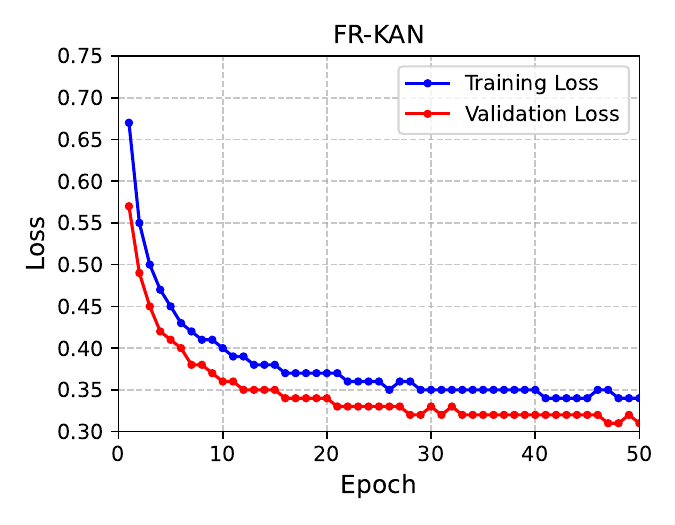}
        \caption{}
        \label{fig:frKANLoss}
    \end{subfigure}
    \begin{subfigure}{0.32\linewidth}
        \centering
        \includegraphics[width=\linewidth]{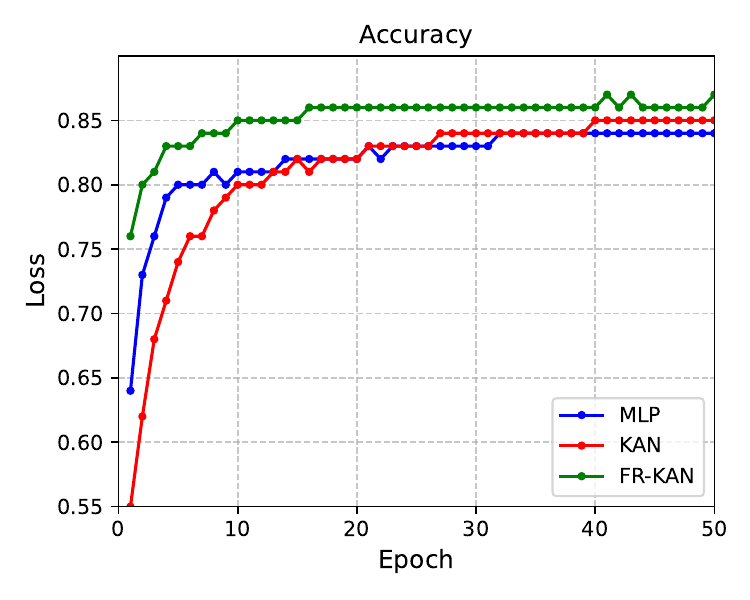}
        \caption{}
        \label{fig:acuracy}
    \end{subfigure}\hfill
    \begin{subfigure}{0.32\linewidth}
        \centering
        \includegraphics[width=\linewidth]{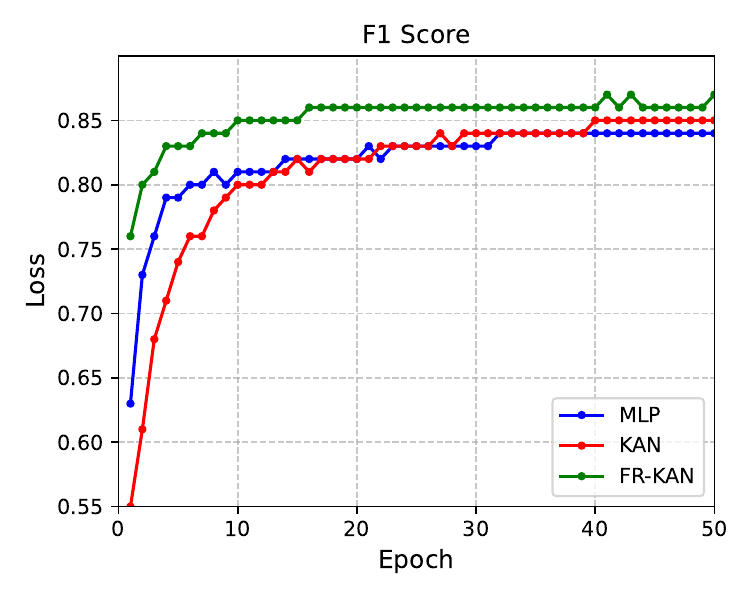}
        \caption{}
        \label{fig:f1}
    \end{subfigure}\hfill
    \begin{subfigure}{0.32\linewidth}
        \centering
        \includegraphics[width=\linewidth]{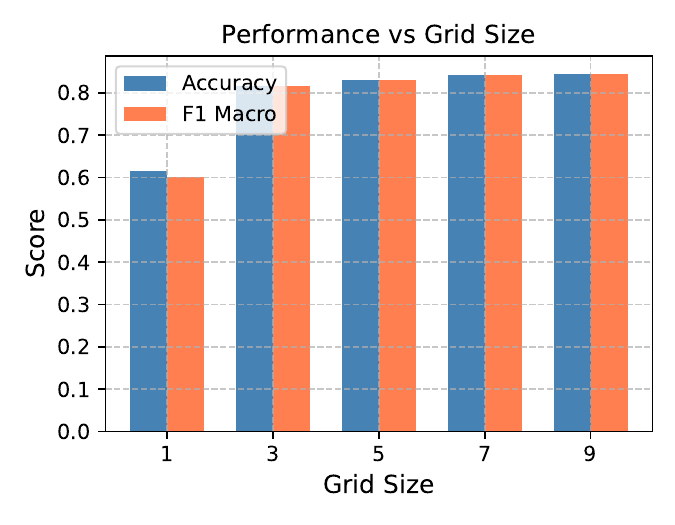}
        \caption{}
        \label{fig:grid}
    \end{subfigure}    
    \caption{Results of the DistilBERT model on the IMDb dataset. For different classification heads, (a)-(c) training and validation loss, (d) accuracy, and (e) F1 score. For the FR-KAN head, (f) accuracy and F1 score at varying grid sizes.}
\end{figure}

\subsection{Convergence of FR-KAN head}
Following the training and validation loss, the FR-KAN head (Fig. \ref{fig:frKANLoss}) converges significantly faster than the other two heads (Fig. \ref{fig:MLPLoss} and \ref{fig:KANLoss}) when trained for $50$ epochs. Additionally, Fig. \ref{fig:acuracy} and \ref{fig:f1} show a substantial increase in both accuracy and F1 score at every training epoch. For instance - the accuracy of the FR-KAN head in the $4^{th}$ epoch is achieved by the other heads after fine-tuning for more than $20$ epochs. The faster convergence of FR-KAN, especially in comparison to spline-based KAN, can be attributed to the smoother functional representation by utilizing the Fourier series.

\subsection{Parameter Efficiency of FR-KAN head}
\label{sec:exp2}
We adjust the trainable parameter count of the MLP classifier by varying the hidden layer width of the MLP defined in Eq. \ref{eq:mlp2}, to match or exceed the trainable parameter count of the FR-KAN head. The KAN heads with a grid size of $1$ have the same number of trainable parameters as the FR-KAN heads with a grid size of $5$. As observed in Tab. \ref{tab:performanceParameters}, the FR-KAN heads not only outperform MLP and KAN heads in terms of performance but do so while requiring equal or fewer parameters.

\subsection{Impact of Grid Size on Performance}
The grid size signifies how fine-grained the FR-KAN coefficients will be in a single FR-KAN layer. Aligning with Corollary \ref{cr:gridError}, increasing the grid size leads to performance improvement, though, the gains diminish at higher grid sizes due to convergence, as evident from Fig. \ref{fig:grid}. Models may also experience overfitting at larger grid sizes, as shown in Tab. \ref{tab:gridAblation}, where a slight drop in performance is observed at higher grid sizes.



\section{Discussion}

\subsection{B-Splines vs Fourier Series}
Both the B-splines of the original KAN implementation and the Fourier series of the FR-KAN head are used to approximate continuous univariate functions. Both methods work well with smooth, continuous, and low-dimensional functions while being computationally inexpensive. At points of discontinuities, splines require higher degree functions while Fourier series suffer from oscillations due to the Gibbs phenomenon.

The Fourier series has several key advantages over splines. By nature, the Fourier representation uses sines and cosines instead of the piece-wise polynomials in splines. Hence, the Fourier series can represent smoother periodic functions which can be advantageous as smoothness can improve KAN performance \cite{samadi2024smooth}. The Fourier series also has better global control compared to the better local control of splines which can contribute to improved parameter efficiency in certain tasks. However, one key downside of the Fourier series is the decline of interpretability in comparison to splines -- although both methods are significantly more interpretable than perceptrons.

\subsection{Evaluation Fairness}
Throughout the work, we gave utmost importance to ensuring that all three classification heads were evaluated on equal grounds. This involved using the same backbone architecture, dataset splits, and training configuration across all experiments. We further attempted to evaluate the classifiers with an equal number of trainable parameters to mitigate biases induced by training larger models. The experiments were conducted using multiple seed values in the same hardware configuration and the average result had been taken. We concur that the results might differ based on the pre-trained model weights, dataset splits, and training configuration but the differences are expected to be negligible. 

\section{Broader Impact}
\subsection{Greener Approach}
Pre-training and fully fine-tuning large networks can be attributed to high energy consumption and substantial carbon footprint \cite{patterson2021carbon}. Classification head fine-tuning offers a greener alternative, achieving slightly worse or better \cite{kumar2022fine} performance compared to fully fine-tuning. These transfer learning strategies also ensure the reusability of pre-trained weights, consequently avoiding redundant pre-training. The proposed FR-KAN heads train faster than the standard MLP heads, thereby consuming less resources and leaving less carbon footprint. 

\subsection{Universal MLP Alternative}
The promising empirical results of FR-KAN heads in the domain of text classification affirms their potential as a generalized MLP alternative. We envision that FR-KANs will not be limited to classification heads only and can be incorporated as neural network layers, \emph{e.g} within the transformer architecture. The adaptability can be explored in other domains \emph{e.g} computer vision, time series analysis, and speech analysis. 
\begin{table*}[htpb]
\centering
\centering
\resizebox{.99\textwidth}{!}{%
\begin{tabular}{cccccccccccc}
\toprule 
\multicolumn{2}{c}{\textbf{Grid Size}}                             & \multicolumn{2}{c}{\textbf{1}} & \multicolumn{2}{c}{\textbf{2}} & \multicolumn{2}{c}{\textbf{3}} & \multicolumn{2}{c}{\textbf{4}} & \multicolumn{2}{c}{\textbf{5}} \\ \cmidrule(lr){1-2} \cmidrule(lr){3-4} \cmidrule(lr){5-6}\cmidrule(lr){7-8} \cmidrule(lr){9-10} \cmidrule(lr){11-12}
\multicolumn{1}{c}{\textbf{Dataset}} & \multicolumn{1}{c}{\textbf{Method}}             & \textbf{Acc}    & \textbf{F1}   & \textbf{Acc}    & \textbf{F1}   & \textbf{Acc}    & \textbf{F1}   & \textbf{Acc}    & \textbf{F1}   & \textbf{Acc}   & \textbf{F1}   \\ \midrule
\multicolumn{1}{c}{AgNews}           & DistilBERT & 0.812           & 0.813         & 0.832           & 0.829         & 0.852           & 0.852         & 0.864           & 0.862         & 0.877          & 0.876         \\ 
\multicolumn{1}{c}{DBpedia}          &   DistilBERT                          & 0.739           & 0.710         & 0.892           & 0.890         & 0.951           & 0.951         & 0.961           & 0.961         & 0.970          & 0.971         \\ 
\multicolumn{1}{c}{IMDb}             & DistilBERT                             & 0.616           & 0.601         & 0.808           & 0.808         & 0.817           & 0.817         & 0.830           & 0.830         & 0.831          & 0.830         \\ 
\multicolumn{1}{c}{Papluca}          & DistilBERT                            & 0.768           & 0.754         & 0.933           & 0.934         & 0.976           & 0.977         & 0.984           & 0.983         & 0.986          & 0.986         \\ 
\multicolumn{1}{c}{SST-5}              & BERT                        & 0.310           & 0.197         & 0.376           & 0.240         & \underline{0.396}           & 0.316         & 0.394           & 0.330         & 0.406          & 0.339         \\ 
\multicolumn{1}{c}{TREC-50}             & BART                        & 0.343           & 0.046         & 0.412           & 0.086         & 0.430           & 0.096         & \underline{0.467}           & \underline{0.125}         & 0.451          & 0.122         \\ 
\multicolumn{1}{c}{YELP-Full}             & DistilBERT                  & 0.385           & 0.328         & 0.420           & 0.401         & 0.457           & 0.439         & 0.466           & 0.453         & 0.492          & 0.475         \\ \bottomrule
\end{tabular}
}
\caption{Change in accuracy and F1 score with grid size for the best performing FR-KAN models from Tab. \ref{tab:performanceAccF1}. While the performance of most models improves with the increase in grid size, a few cases where a smaller grid size outperforms the larger grid size are \underline{underlined}. The grid size of 5 usually shows the best performance for all the benchmarks and has been chosen as the default grid size for all experiments.}
\label{tab:gridAblation}
\end{table*}
\section{Related Work}
\label{sec:relatedWork}
\paragraph{Multi-layer Perceptrons (MLPs)} Inspired by the neurons in the biological brain, the original unilayer perceptron dates back to the 50s and was intended as a machine for pattern recognition \cite{rosenblatt1958perceptron}. Initially constrained by its single-layer architecture, the methodology was later expanded to multi-layer perceptrons based on the universal approximation theorem which states that a continuous function can be approximated by a feedforward network with a finite number of neurons \cite{hornik1989multilayerMLP}. During the deep learning era, researchers quickly adopted MLPs as a fundamental module in several foundational deep learning architectures \cite{lecun1998gradient, krizhevsky2017imagenet}. The versatility and adaptability of MLPs have made them the most popular choice for classification heads in virtually all classification-based tasks \cite{abiodun2018state}.

\paragraph{Kolmogorov-Arnold Networks (KANs)}
KAN \cite{liu2024kan} is based on the Kolmogorov-Arnold representation theorem \cite{tikhomirov1991representation} stating that a continuous multivariate function is a composition of multiple continuous univariate functions and addition operations. Boasting faster neural scaling laws and interpretability KANs became popular in multiple domains including time series analysis \cite{vaca2024kolmogorov} and forecasting \cite{xu2024kolmogorov, genet2024temporal}, satellite image classification \cite{cheon2024kolmogorov}, mechanics problems \cite{abueidda2024deepokan}, and quantum architecture search \cite{kundu2024kanqas}. KANs have developed multiple variations utilizing the wavelet transform \cite{bozorgasl2024wav}, Jacobi basis functions \cite{aghaei2024fkan}, radial basis functions \cite{li2024kolmogorov}, and several functional combinations \cite{ta2024fc,ta2024bsrbf}. We highlight the work of \cite{xu2024fourierkan}, where the Fourier KAN was introduced to improve graph collaborative filtering in recommendation tasks.   
\paragraph{Transformers in Text Classification}
The transformer architecture \cite{vaswani2017attention}, initially introduced for sequence-to-sequence generation, has revolutionized various domains of NLP, including text classification \cite{cunha2023comparative}. While primarily proposed as encoder-decoder architecture, pre-trained transformer encoders, such as BERT \cite{devlin2018bert} and its variants \cite{sanh2019distilbert, liu2019roberta, he2020deberta} have been popular in natural language sequence classification tasks.
\paragraph{Linear Probing}
Full fine-tuning demands significant computational resources and is susceptible to overfitting on smaller datasets \cite{lian2022scaling,gao2023tuning}. In contrast, finetuning the classification head, \emph{i.e.} linear probing, can be a resource-efficient alternative with enhanced robustness on out-of-distribution data \cite{kumar2022fine}. Linear probing can be enhanced via several strategies, \emph{e.g.} parameter-efficient tuning \cite{yang2022parameter}. 


\section{Conclusion}
Our work explores Fourier-KAN as a promising alternative to MLPs for text classification using pre-trained transformer classifiers. We find that FR-KANs perform better, require fewer parameters, and train faster compared to MLPs. In the future, we wish to investigate the potential of FR-KANs replacing MLPs inside the transformer architectures.

\section*{Limitations}
Although significant strides have been made by the FR-KAN head over its KAN predecessor, the improved performance comes at the expense of interpretability. We were also unable to evaluate the classification heads with a lower parameter count, as the number of parameters in KAN heads is constrained by the grid size, which cannot be reduced beyond the minimum value of $1$. Finally, our study is limited to the use of B-splines and Fourier series as the univariate function in the KAN representation. Alternative methods of function approximation might produce interesting results.

\bibliography{ref}

\end{document}